\renewcommand{\citet}[1]{\cite{#1}}
\newcommand{\R}{\mathbb{R}}
\newcommand{\N}{\mathbb{N}}
\newcommand{\Prob}{\mathbb{P}}
\newcommand{\E}{\mathbb{E}}
\newcommand{\tr}{\operatorname{tr}}
\newcommand{\sgn}{\operatorname{sgn}}
\newcommand{\probO}{O_{\mathbb{P}}}
\newcommand\ubar[1]{\stackunder[1.2pt]{$#1$}{\rule{1ex}{.075ex}}}
\newtheorem{theorem}{Theorem}
\newtheorem{lemma}[theorem]{Lemma}
\theoremstyle{definition}
\newtheorem{definition}[theorem]{Definition}
\newtheorem{assumption}{Assumption}
\newenvironment{example}
  {\pushQED{\qed}\examplex}
  {\popQED\endexamplex}
\author{Patrick Rubin-Delanchy}
\affil{University of Bristol and Heilbronn Institute for Mathematical Research, U.K.}
\title{Manifold structure in graph embeddings}
\date{}
\begin{document}

\maketitle

\begin{abstract}
  Statistical analysis of a graph often starts with embedding, the process of representing its nodes as points in space. How to choose the embedding dimension is a nuanced decision in practice, but in theory a notion of true dimension is often available. In spectral embedding, this dimension may be very high. However, this paper shows that existing random graph models, including graphon and other latent position models, predict the data should live near a much lower-dimensional set. One may therefore circumvent the curse of dimensionality by employing methods which exploit hidden manifold structure.
\end{abstract}

\section{Introduction}
The hypothesis that high-dimensional data tend to live near a manifold of low dimension is an important theme of modern statistics and machine learning, often held to explain why high-dimensional learning is realistically possible \citep{tenenbaum2000global,belkin2003laplacian,dasgupta2008random,carlsson2009topology,fefferman2016testing,nakada2019adaptive}. The object of this paper is to show that, for a theoretically tractable but rich class of random graph models, such a phenomenon occurs in the spectral embedding of a graph.

Manifold structure is shown to arise when the graph follows a latent position model \citep{hoff2002latent}, wherein connections are posited to occur as a function of the nodes’ underlying positions in space. Because of their intuitive appeal, such models have been employed in a great diversity of disciplines, including social science \citep{krivitsky2009representing,mccormick2015latent,friel2016interlocking}, neuroscience \citep{durante2017nonparametric,priebe2019two}, statistical mechanics \citep{krioukov2010hyperbolic}, information technology \citet{yook2002modeling}, biology \citep{raftery2012fast} and ecology \citep{fletcher2011social}. In many more endeavours latent position models are used --- at least according to Definition~\ref{def:latent_position_model} (to follow) --- but are known by a different name; examples include the standard \citep{holland1983stochastic}, mixed \citep{airoldi2008mixed} and degree-corrected \citep{karrer2011stochastic} stochastic block models, random geometric graphs \citep{penrose2003random}, and the graphon model \citep{lovasz2012large}, which encompasses them.

Spectral embedding obtains a vector representation of each node by eigencomposition of the adjacency or normalised Laplacian matrix of the graph, and it is not obvious that a meaningful connection to the latent positions of a given model should exist. One contribution of this article is to make the link clear, although existing studies already take us most of the way: Tang and co-authors \citep{tang2013universally} and Lei \citep{lei2018network} construct identical maps respectively assuming a positive-definite kernel (here generalised to indefinite) and a graphon model (here extended to $d$ dimensions). Through this connection, the notion of a true embedding dimension $D$ emerges, which is the large-sample rank of the expected adjacency matrix, and it is potentially much greater than the latent space dimension $d$.

The main contribution of this article is to demonstrate that, though high-dimensional, the data live `close' to a low-dimensional structure --- a distortion of latent space --- of dimension governed by the curvature of the latent position model kernel along its diagonal. One should have in mind, as the typical situation, a $d$-dimensional manifold embedded in infinite-dimensional ambient space. However, it would be a significant misunderstanding to believe that graphon models, acting on the unit interval, so that $d=1$, could produce only one-dimensional manifolds. Instead, common H\"older $\alpha$-smoothness assumptions on the graphon \citep{wolfe2013nonparametric,gao2015rate,lei2018network} limit the maximum possible manifold dimension to $2/\alpha$.

By `close', a strong form of consistency is meant \citep{cape2017two}, in which the largest positional error vanishes as the graph grows, so that subsequent statistical analysis, such as manifold estimation, benefits doubly from data of higher quality, including proximity to manifold, and quantity. This is simply established by recognising that a generalised random dot product graph \citep{rubin2017statistical} (or its infinite-dimensional extension \citep{lei2018network}) is operating in ambient space, and calling on the corresponding estimation theory.

It is often argued that a relevant asymptotic regime for studying graphs is \emph{sparse}, in the sense that, on average, a node's degree should grow less than linearly in the number of nodes \citep{newman2018networks}. The afore-mentioned estimation theory holds in such regimes, provided the degrees grow quickly enough --- faster than logarithmically --- this rate corresponding to the information theoretic limit for strong consistency \citep{abbe2017community}. The manner in which sparsity is induced, via global scaling, though standard and required for the theory, is not the most realistic, failing the test of projectivity among other desiderata \citep{caron2014sparse}. Several other papers have treated this matter in depth \citep{orbanz2014bayesian,veitch2015class,borgs2017sparse}.

The practical estimation of $D$, which amounts to rank selection, is a nuanced issue with much existing discussion --- see \citet{priebe2019two} for a pragmatic take. A theoretical treatment must distinguish the cases $D < \infty$ and $D = \infty$. In the former, simply finding a consistent estimate of $D$ has limited practical utility: appropriately scaled eigenvalues of the adjacency matrix converge to their population value, and all kinds of unreasonable rank selection procedures are therefore consistent. However, to quote \citet{priebe2019two}, ``any quest for a universally optimal methodology for choosing the ``best'' dimension [...], in general, for finite $n$, is a losing proposition''. In the $D =\infty$ case, reference \citet{lei2018network} finds appropriate rates under which to let $\hat D \rightarrow \infty$, to achieve consistency in a type of Wasserstein metric. Unlike the $D < \infty$ case, stronger consistency, i.e., in the largest positional error, is not yet available. All told, the method by \citet{zhu2006automatic}, which uses a profile-likelihood-based analysis of the scree plot, provides a practical choice and is easily used within the R package `igraph'. The present paper's only addition to this discussion is to observe that, under a latent position model, rank selection targets \emph{ambient} rather than \emph{intrinsic} dimension, whereas the latter may be more relevant for estimation and inference. For example, one might legitimately expect certain graphs to follow a latent position model on $\R^3$ (connectivity driven by physical location) or wish to test this hypothesis. Under assumptions set out in this paper (principally Assumption~\ref{assump:assump_2} with $\alpha=1$), the corresponding graph embedding should concentrate about a three-dimensional manifold, whereas the ambient dimension is less evident, since it corresponds to the (unspecified) kernel's rank. 

The presence of manifold structure in spectral embeddings has been proposed in several earlier papers, including \citet{priebe2017semiparametric,athreya2018estimation,trosset2020learning}, and the notions of model complexity versus dimension have also previously been disentangled \citep{priebe2019two,yang2019simultaneous,passino2019bayesian}. That low-dimensional manifold structure arises, more generally, under a latent position model, is to the best of our knowledge first demonstrated here.

The remainder of this article is structured as follows. Section~\ref{sec:prelim} defines spectral embedding and the latent position model, and illustrates this paper's main thesis on their connection through simulated examples. In Section~\ref{sec:representation}, a map sending each latent position to a high-dimensional vector is defined, leading to the main theorem on the preservation of intrinsic dimension. The sense in which spectral embedding provides estimates of these high-dimensional vectors is discussed Section~\ref{sec:consistency}. Finally, Section~\ref{sec:applications} gives examples of applications including regression, manifold estimation and visualisation. All proofs, as well as auxiliary results, discussion and figures, are relegated to the Supplementary Material.

\section{Definitions and examples}\label{sec:prelim}
\begin{definition}[Latent position model] \label{def:latent_position_model}
  Let $f : \mathcal{Z} \times \mathcal{Z} \rightarrow [0,1]$ be a symmetric function, called a kernel, where $\mathcal{Z} \subseteq \R^d$. An undirected graph on $n$ nodes is said to follow a latent position network model if its adjacency matrix satisfies
  \[\+A_{ij}\mid Z_1, \ldots, Z_n \overset{ind}{\sim} \text{Bernoulli}\left\{f(Z_i, Z_j)\right\}, \quad \text{for $i < j$},\]
  where $Z_1, \ldots, Z_n$ are independent and identically distributed replicates of a random vector $Z$ with distribution $F_Z$ supported on $\mathcal{Z}$. If $\mathcal{Z}=[0,1]$ and $F_Z = \text{uniform}[0,1]$, the kernel $f$ is known as a graphon.
\end{definition}
Outside of the graphon case, where $f$ is usually estimated nonparametrically, several parametric models for $f$ have been explored, including $f(x,y) = \text{logistic}(\alpha - \Vert x - y\rVert)$ \citep{hoff2002latent}, where $\lVert \cdot \rVert$ is any norm and $\alpha$ a parameter, $f(x,y) = \text{logistic}(\alpha + x_1 + y_1 + x_2^\top y_2)$ where $x = (x_1, x_2), y = (y_1, y_2)$ \citep{hoff2003}, $f(x,y) = \Phi(\alpha + x^\top \boldsymbol \Lambda y)$, where $\boldsymbol \Lambda$ is diagonal (but not necessarily non-negative) \citep{hoff2008modeling}, the Gaussian radial basis kernel $f(x,y) = \exp\{-\Vert x - y \rVert^2/(2 \sigma^2)\}$, and finally the sociability kernel $f(x,y) = 1-\exp(-2 x y)$ \citet{aldous1997brownian,norros2006conditionally,bollobas2007phase,van2016random,caron2014sparse} where, in the last case, $x,y \in \R_+$. Typically, such functions have infinite rank, as defined in Section~\ref{sec:representation}, so that the true ambient dimension is $D = \infty$.

\begin{definition}[Adjacency and Laplacian spectral embedding]
Given an undirected graph with adjacency matrix $\+A$ and a finite embedding dimension $\hat D > 0$, consider the spectral decomposition $\+A = \hat{\+U}\hat{\+S}\hat{\+U}^{\top}+\hat{\+U}_{\perp}\hat{\+S}_{\perp}\hat{\+U}_{\perp}^{\top}$, where $\hat{\+S}$ is the $\hat D \times \hat D$ diagonal matrix containing the $\hat D$ largest-by-magnitude eigenvalues of $\+A$ on its diagonal, and the columns of $\hat{\+U}$ are corresponding orthonormal eigenvectors. Define the adjacency spectral embedding of the graph by $\hat{\+X} = [\hat X_{1}|\dots| \hat X_{n}]^{\top}:= \hat{\+U}|\hat{\+S}|^{1/2} \in \R^{n \times \hat D}$. Define its Laplacian spectral embedding by $\breve{\+X} = [\breve X_{1}|\dots| \breve X_{n}]^{\top} := \breve{\+U}|\breve{\+S}|^{1/2}$ where $\breve{\+U}$, $\breve{\+S}$ are instead obtained from the spectral decomposition of $\+L :=\+D^{-1/2}\+A\+D^{-1/2}$ and $\+D$ is the diagonal degree matrix.
\end{definition}

Figure~\ref{fig:low_dimensional_examples_no_ridge} shows point clouds obtained by adjacency spectral embedding graphs from three latent position network models on $\R$ (with $n=5000$, $\hat D =3$). The first was generated using the kernel $f(x,y) = 1-\exp(-2 x y)$, and latent positions $Z_i \overset{i.i.d.}{\sim}\text{Gamma}(1,1)$, truncated to a bounded set, suggested to represent node ``sociability'' \citep{caron2014sparse}. The second corresponds to a graphon model with kernel chosen to induce filamentary latent structure with a branching point. In the third, the Gaussian radial basis kernel $\exp\left\{-\Vert x - y \rVert^2_\circ/(2 \sigma^2)\right\}$ is used, with $Z_i$ uniformly distributed on a circle and $\Vert x - y \rVert_{\circ}$ representing geodesic distance. In all three cases the true ambient dimension is infinite, so that the figures show only a 3-dimensional projection of the truth. Our main result, Theorem~\ref{thm:embedding}, predicts that the points should live close to a one-dimensional structure embedded in infinite dimension, and this structure (or rather its three-dimensional projection) is shown in blue. %Moreover, Assumptions~\ref{assump:assump_1}--\ref{assump:assump_3} are satisfied with $\alpha = 1$. 

% \Vert x - y \rVert_2

\begin{figure}[t]
  \centering
  \includegraphics[width=\textwidth]{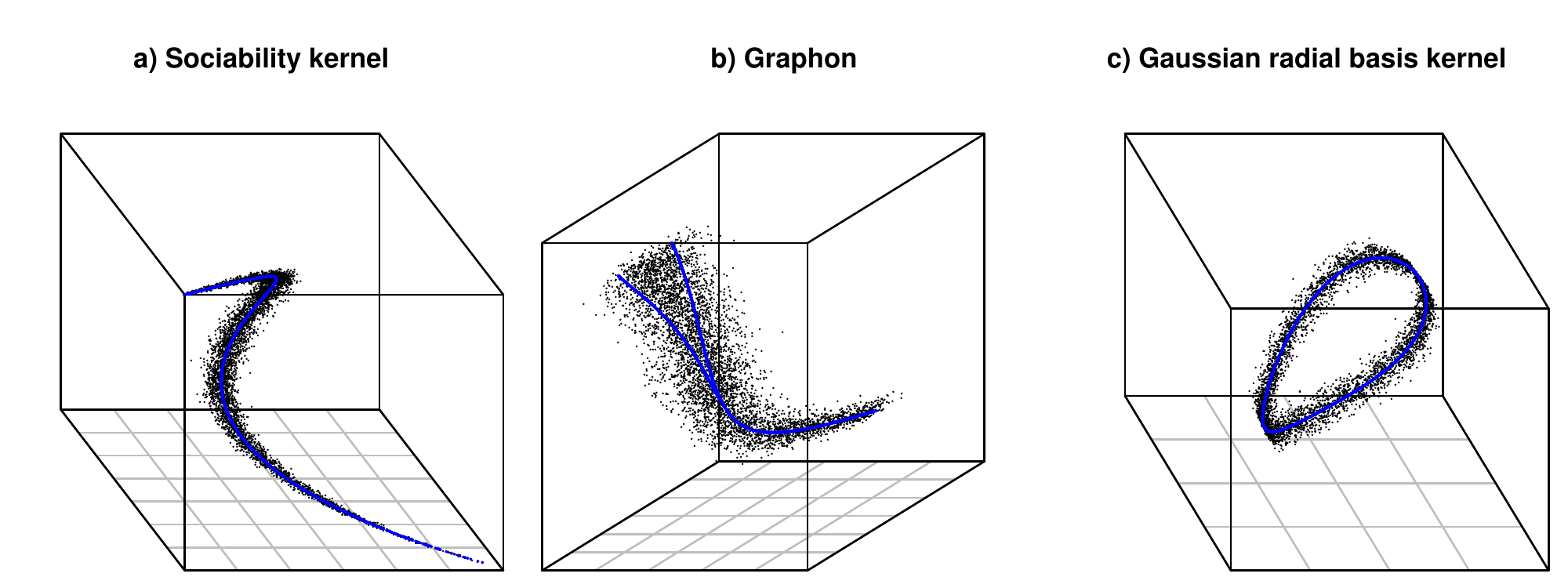}
  \caption{Spectral embedding of graphs simulated from three different latent position network models of latent dimension one ($d=1$). The true ambient dimension is infinite ($D = \infty$) and only the first three coordinates are shown. The theory predicts the data should live close to a one-dimensional structure, informally called a manifold (and obviously not strictly so in example b), shown in blue. Further details in main text.}
  \label{fig:low_dimensional_examples_no_ridge}
\end{figure}

\section{Spectral representation of latent position network models}\label{sec:representation}
In this section, a map $\phi: \R^d \rightarrow L^2(\R^d)$ is defined that transforms the latent position $Z_i$ into a vector $X_i := \phi(Z_i)$. The latter is shown to live on a low-dimensional subset of $L^2(\R^d)$, informally called a manifold, which is sometimes but by no means always of dimension $d$. As suggested by the notation and demonstrated in the next section, the point $\hat X_i$, obtained by spectral embedding, provides a consistent estimate of $X_i$ and lives near (but not on) the manifold. 

Assume that $f \in L^2(\R^d \times \R^d)$, if necessary extending $f$ to be zero outside its original domain $\mathcal{Z}$, though a strictly stronger (trace-class) assumption is to come. Define an operator $A$ on $L^2(\R^d)$ through

\begin{align*} A g(x) = \int_{\R^d} f(x,y) g(y) dy, \quad g \in L^2(\R^d). \end{align*}
Since $\int_{\R^d} \int_{\R^d} |f(x,y)|^2 dx\  dy  < \infty$, the function $f$ is a Hilbert-Schmidt kernel %by definition
and the operator $A$ is compact. The said `true' ambient dimension $D$ is the rank of $f$, that is, the potentially infinite dimension of the range of $A$, but this plays no role until the next section.

From the symmetry of $f$, the operator is also self-adjoint and therefore there exists a possibly infinite sequence of non-zero eigenvalues $\lambda_1 \geq \lambda_2 \geq \ldots$ with corresponding orthonormal eigenfunctions $u_1, u_2, \ldots \in L^2(\R^d)$ such that
\[A u_j = \lambda_j u_j. \]
Moreover, $(u_j)$ and $(\lambda_j)$ can be extended, by the potential addition of further orthonormal functions, which give an orthonormal basis for the null space of $A$ and are assigned the eigenvalue zero, so that $(u_j)$ provides a complete orthonormal basis of $L^2(\R^d)$ and, for any $g \in L^2(\R^d)$, 
\[g = \sum_{j} \langle g, u_j \rangle u_j\quad \text{and} \quad A g = \sum_{j} \lambda_j \langle g, u_j \rangle u_j.\]
The notions employed above are mainstream in functional analysis, see e.g. \citet{debnath2005introduction}, and were used to analyse graphons in \citet{kallenberg1989representation,bollobas2007phase,lovasz2012large,xu2018rates,lei2018network}.

Define the operator $|A|^{1/2}$ through the spectral decomposition of $A$ via \citep[p.320]{lax}  
\begin{align*}
  |A|^{1/2} g &= \sum_{j} |\lambda_j|^{1/2} \langle g, u_j \rangle u_j.\\
\end{align*}
\begin{assumption}\label{assump:assump_1} $A$ is trace-class \citep[p.330]{lax}, that is,
  \[\tr(A) = \sum_{j} |\lambda_j| < \infty.\]
\end{assumption}
This assumption, common in operator theory, was employed by \citet{lei2018network} to draw the connection between graphon estimation and spectral embedding. Under Assumption~\ref{assump:assump_1}, the operator $|A|^{1/2}$ is Hilbert-Schmidt, having bounded Hilbert-Schmidt norm
\[\lVert A^{1/2}\rVert^2_{\text{HS}} = \tr(A),\]
and representing $|A|^{1/2}$ as an integral operator
\[|A|^{1/2}g(x) = \int_{\R^d} k(x,y) g(y) dy,\]
the kernel $k(x,y) := \sum_{j} |\lambda_j|^{1/2} u_j(x) u_j(y)$ is an element of $L^2(\R^d \times \R^d)$, using the identity
\[\lVert A^{1/2}\rVert^2_{\text{HS}} = \int_{\R^d} \int_{\R^d} |k(x,y)|^2 dy\  dx.\]
By explicitly partitioning the image of the outer integrand (as is implicitly done in a Lebesgue integral), the Lebesgue measure of intervals of $x$ where $\int_{\R^d} |k(x,y)|^2 dy = \infty$ is determined to be zero, and therefore $k(x, \cdot) \in L^2(\R^d)$ almost everywhere in $\R^d$.

Note that $k(x, \cdot)$ has coordinates $\left(|\lambda_j|^{1/2} u_j(x)\right)$  with respect to the basis $(u_j)$. Moreover, under the indefinite inner product
\[[g,h] := \sum_{j} \sgn(\lambda_j) \langle g, u_j \rangle \langle h, u_j\rangle, \quad g,h \in L^2(\R^d),\]
we have
\begin{align*}
  \int_{\R^d} [k(x,\cdot),k(y,\cdot)] g(y) dy &= \int_{\R^d} \sum_{j} \lambda_j u_j(x) u_j(y) g(y) dy \\
                                              &= \sum_j \lambda_j \langle g, u_j \rangle u_j(x)\\
                                              &= A g(x),
\end{align*}
and therefore $[k(x,\cdot),k(y,\cdot)] = f(x,y)$ almost everywhere, that is, with respect to Lebesgue measure on $\R^d \times \R^d$.

The map $\phi: x \mapsto k(x,\cdot)$, whose precise domain is discussed in the proof of Theorem~\ref{thm:embedding}, transforms each latent position $Z_i$ to a vector $X_i := \phi(Z_i)$. (That $\hat X_i$ provides an estimate of $X_i$ will be discussed in the next section.) It is now demonstrated that $X_i$ lives on a low-dimensional manifold in $L^2(\R^d)$.

The operator $A$ admits a unique decomposition into positive and negative parts satisfying \citep[p. 213]{debnath2005introduction}
\begin{align*}
  A = A_+ - A_-, \: A_+ A_- = 0,\: \langle A_+ g, g\rangle  \geq 0 , \: \langle A_- g, g \rangle \geq 0, \quad {\text{for all $g \in L^2(\R^d)$}.} 
\end{align*}  
These parts are, more explicitly, the integral operators associated with the symmetric kernels
\[f_+(x,y) := \sum_{j} \max(\lambda_j,0) u_j(x) u_j(y), \quad f_-(x,y) := \sum_{j} \max(-\lambda_j,0) u_j(x) u_j(y),\]
so that $f = f_+ - f_-$ in $L^2(\R^d \times \R^d)$.

\begin{assumption} \label{assump:assump_2}
  There exist constants $c>0$ and $\alpha \in (0,1]$ such that for all $x,y \in \mathcal{Z}$,
  \[ \Delta^2 f_+(x,y) \leq c \lVert x - y\rVert_2^{2\alpha}; \quad \Delta^2 f_-(x,y) \leq c \lVert x - y\rVert_2^{2\alpha},\]
  where $\Delta^2 f_\cdot (x,y) := f_\cdot(x,x) + f_\cdot(y,y)- 2 f_\cdot(x,y)$.
  % \[f_+(x,x)-f_+(x,y) +f_+(y,y)-f_+(y,x) \leq c \lVert x - y\rVert_2^{2\alpha}, \quad |f_-(x,x)-f_-(x,y)| \leq c \lVert x - y\rVert_2^{2\alpha}.\]
\end{assumption}
To give some intuition, this assumption controls the curvature of the kernel, or its operator-sense absolute value, along its diagonal. In the simplest case where $f$ is positive-definite and $\mathcal{Z} = \R$, the assumption is satisfied with $\alpha=1$ if $\partial^2 f/(\partial x \partial y)$ is bounded on $\{(x,y): x,y\in \mathcal{Z}, |x-y| \leq \epsilon\}$, for some $\epsilon> 0$. The assumption should not be confused with a H\"older continuity assumption, as is common with graphons, which would seek to bound $|f(x,y)-f(x',y')|$ \citep{gao2015rate}. Nevertheless, such an assumption can be exploited to obtain some $\alpha \leq 1/2$, as discussed in Example~\ref{ex:graphon}.

\begin{assumption} \label{assump:assump_3}
The distribution of $Z$ is absolutely continuous with respect to $d$-dimensional Lebesgue measure.
\end{assumption}
Since it has always been implicit that $d \geq 1$, the assumption above does exclude the stochastic block model, typically represented as a latent position model with discrete support. This case is inconvenient, although not insurmountably so, to incorporate within a proof technique which works with functions defined \emph{almost} everywhere, since one must verify that the vectors representing communities do not land on zero-measure sets where the paper's stated identities might not hold. However, the stochastic block model is no real exception to the main message of this paper: in this case the relevant manifold has dimension zero. 

The vectors $X_1 = k(Z_1, \cdot), \ldots, X_n = k(Z_n, \cdot)$ will be shown to live on a set of low \emph{Hausdorff} dimension. The $a$-dimensional Hausdorff content \citep{bishop2017fractals} of a set $\mathcal{M}$ is
\[\mathscr{H}^{a}(\mathcal{M}) = \inf \left\{ \sum_{i} |S_i|^{a} : \mathcal{M} \subseteq \bigcup_i S_i\right\},\]
where $S_i$ are arbitrary sets, $| S |$ denotes the diameter of $S$, here to be either in the Euclidean or $L^2$ norm, as the situation demands. The Hausdorff dimension of $\mathcal{M}$ is
\[\dim(\mathcal{M}) = \inf \left\{a: \mathscr{H}^{a}(\mathcal{M}) =  0\right\}.\]

\begin{theorem} \label{thm:embedding}
  Under Assumptions~\ref{assump:assump_1}--\ref{assump:assump_3}, there is a set $\mathcal{M} \subset L^2(\R^d)$ of Hausdorff dimension $d/\alpha$ which contains $k(Z, \cdot)$ with probability one.
\end{theorem}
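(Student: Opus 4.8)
The plan is to show that the map $\phi$, restricted to a full-measure subset of $\mathcal{Z}$, is Hölder continuous of exponent $\alpha$ into $L^2(\R^d)$, and then to invoke the standard fact that an $\alpha$-Hölder image of a set $E\subseteq\R^d$ has Hausdorff dimension at most $(1/\alpha)\dim(E)\le d/\alpha$.

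First I would fix the domain of $\phi$. Since $\lVert k(x,\cdot)\rVert_2^2=\sum_j|\lambda_j|u_j(x)^2$, the argument already given (relying on Assumption~\ref{assump:assump_1}) shows this is finite for $x$ outside a Lebesgue-null set; let $\mathcal{Z}'$ denote the resulting full-measure subset of $\mathcal{Z}$, on which $\phi(x):=k(x,\cdot)$ is a genuine element of $L^2(\R^d)$. By Assumption~\ref{assump:assump_3}, $\Prob(Z\in\mathcal{Z}')=1$, so it suffices to produce a set of Hausdorff dimension $d/\alpha$ containing $\phi(\mathcal{Z}')$; the natural choice is $\mathcal{M}=\phi(\mathcal{Z}')$, or its closure obtained by extending $\phi$ Hölder-continuously to $\overline{\mathcal{Z}'}$.

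The crux is a short computation. Writing $\phi(x)=k(x,\cdot)=\sum_j|\lambda_j|^{1/2}u_j(x)\,u_j$ in the orthonormal basis $(u_j)$, for $x,x'\in\mathcal{Z}'$,
\[
\lVert\phi(x)-\phi(x')\rVert_2^2=\sum_j|\lambda_j|\bigl(u_j(x)-u_j(x')\bigr)^2=\Delta^2 f_+(x,x')+\Delta^2 f_-(x,x'),
\]
where the last step uses $\sum_j|\lambda_j|u_j(s)u_j(t)=f_+(s,t)+f_-(s,t)$, each series converging absolutely on $\mathcal{Z}'$ by Cauchy--Schwarz. Assumption~\ref{assump:assump_2} then bounds the right-hand side by $2c\lVert x-x'\rVert_2^{2\alpha}$, so $\phi$ is $\alpha$-Hölder on $\mathcal{Z}'$ with constant $\sqrt{2c}$. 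A routine covering argument finishes the job: given $b>d/\alpha$, pick a cover $\mathcal{Z}'\subseteq\bigcup_i S_i$ with $\sum_i|S_i|^{b\alpha}$ arbitrarily small (possible since $\dim\mathcal{Z}'\le d<b\alpha$), observe $|\phi(S_i\cap\mathcal{Z}')|\le\sqrt{2c}\,|S_i|^\alpha$, and conclude $\mathscr{H}^{b}\bigl(\phi(\mathcal{Z}')\bigr)=0$, hence $\dim\phi(\mathcal{Z}')\le d/\alpha$. If a set of Hausdorff dimension \emph{exactly} $d/\alpha$ is wanted --- $\phi$ may well be degenerate, for instance under the stochastic block model, where the relevant set is a finite collection of points --- one simply takes the union of $\phi(\mathcal{Z}')$ with a fixed reference set of dimension $d/\alpha$ in $L^2(\R^d)$, which does not disturb the almost-sure containment of $k(Z,\cdot)$.

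I expect the only genuine subtlety to be bookkeeping around ``almost everywhere'': the kernel $k(x,\cdot)$ and the pointwise series defining $f_\pm$ are controlled only on $\mathcal{Z}'$, so one must check that (i) $\mathcal{Z}'$ indeed has full Lebesgue measure, whence $Z\in\mathcal{Z}'$ almost surely --- this is exactly where Assumption~\ref{assump:assump_3} is used --- and (ii) the identity $\sum_j|\lambda_j|u_ju_j=f_++f_-$ together with Assumption~\ref{assump:assump_2} may legitimately be read pointwise on $\mathcal{Z}'$ rather than merely in the $L^2$ sense. The remaining ingredients --- the positive/negative spectral decomposition and the Hölder-to-Hausdorff estimate --- are entirely standard.
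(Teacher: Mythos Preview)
Your proposal is correct and follows essentially the same route as the paper: establish that $\phi$ is $\alpha$-H\"older on a full-measure subset of $\mathcal{Z}$ via the identity $\lVert\phi(x)-\phi(y)\rVert_2^2=\Delta^2 f_+(x,y)+\Delta^2 f_-(x,y)$, then run the standard covering argument to bound the Hausdorff dimension of the image by $d/\alpha$. The paper derives the inner-product identity $\langle k(x,\cdot),k(y,\cdot)\rangle=f_+(x,y)+f_-(x,y)$ by an integral-operator argument (testing against arbitrary $g$) rather than directly from the series as you do, and then carefully extracts a full-measure set $\underline{\mathcal{Z}}\subset\mathcal{Z}$ on which the identity holds pointwise; this is precisely the ``bookkeeping around almost everywhere'' you correctly flag as the only genuine subtlety, and your set $\mathcal{Z}'$ would need to be intersected with such a set for the computation to go through.
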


\begin{example}[Gaussian radial basis function]
  For arbitrary finite $d$, consider the kernel 
  \[f(x,y) = \exp\left(-\frac{\Vert x - y \rVert^2_2}{2 \sigma^2}\right), \quad x,y \in \mathcal{Z} \subseteq \R^d,\]
  known as the Gaussian radial basis function. Since $f$ is positive-definite, the trace formula \citep{brislawn1988kernels}
  \[\tr(A) = \int_x f(x,x) d x\]
  shows it is trace-class if and only if $\mathcal{Z}$ is bounded. Assume $F_Z$ is absolutely continuous on $\mathcal{Z}$.  Since  $1-\exp(a^2) \leq a^2$ for $a \in \R$, it follows that
  \[f(x,x)-f(x,y) \leq \frac{1}{2\sigma^2}\Vert x - y \rVert^2_2,\]
  and since $f_+ = f, f_- = 0$ (because $f$ is positive-definite)
  \[\Delta^2 f_+ \leq  \frac{1}{\sigma^2}\Vert x - y \rVert^2_2; \quad \Delta^2 f_- = 0,\]
  satisfying Assumption~\ref{assump:assump_2} with $\alpha=1$. The implication of Theorem~\ref{thm:embedding} is that \emph{the vectors $X_1, \ldots, X_n$ (respectively, their spectral estimates $\hat X_1, \ldots, \hat X_n$) live on (respectively, near) a set of Hausdorff dimension $d$.}
\end{example}

\begin{example}[Smooth graphon]\label{ex:graphon}  
  If a graphon $f$ is H\"older continuous for some $\beta \in (1/2,1]$, that is,
  \[|f(x,y)-f(x,y')| \leq c |y-y'|^{\beta}\]
  for some $c>0$, then it is trace-class \citep{lei2018network}, \citep[p.347]{lax}.

    If, additionally, it is positive-definite, it can be shown that Assumption~\ref{assump:assump_2} holds with $\alpha=1$ if the partial derivative $\partial^2 f/(\partial x \partial y)$ is bounded in a neighbourhood of $\{(x,x): x\in [0,1]\}$. Then the implication of Theorem~\ref{thm:embedding} is that \emph{the vectors $X_1, \ldots, X_n$ (respectively, their spectral estimates $\hat X_1, \ldots, \hat X_n$) live on (respectively, near) a set of Hausdorff dimension one}. In visual terms, they may cluster about a curve or filamentary structure (see Figure \ref{fig:low_dimensional_examples_no_ridge}b).

  On the other hand, if $f$ is only H\"older continuous ($\beta \in (1/2,1]$) and positive-definite, Assumptions~\ref{assump:assump_1}--\ref{assump:assump_3} hold still, with $\alpha = \beta/2$. In this case, the supporting manifold has dimension not exceeding $2/\beta$, and is at most two-dimensional if $f$ is Lipschitz ($\beta=1$). %, and a
\end{example}

\section{Consistency of spectral embedding}\label{sec:consistency}
The sense in which the spectral embeddings $\hat X_i, \breve X_i$ provide estimates of $X_i = k(Z_i, \cdot)$ is now discussed. The strongest guarantees on concentration about the manifold can be given if $f \in L^2(\R^d \times \R^d)$ has finite rank, that is, the sequence of non-zero eigenvalues of the associated integral operator $A$ has length $D < \infty$. Existing infinite rank results are discussed in the Supplementary Material. Random graph models where the finite rank assumption holds include:
\begin{enumerate}
\item all standard \citep{holland1983stochastic}, mixed \citep{airoldi2008mixed} and degree-corrected \citep{karrer2011stochastic} stochastic block models --- proofs of which can be found in \citet{rohe2011spectral,sussman2012consistent,lei2015consistency,rubin2017consistency,rubin2017statistical,lei2018network};
\item all latent position models with polynomial kernel of finite degree (Lemma~\ref{lem:poly}, Supplementary Material). 
\item under sparsity assumptions, latent position models with analytic kernel (Lemma~\ref{lem:analytic}, Supplementary Material). 
\end{enumerate}
The second of these items seems particularly useful, perhaps taken with the possibility of Taylor approximation.

Under a finite rank assumption, the high-dimensional vectors $X_i$ will be identified by their coordinates with respect to the basis $(u_j)$, truncated to the first $D$, since the remaining are zero. The latent position model of Definition~\ref{def:latent_position_model} then becomes:
\[\+A_{ij}\mid X_1, \ldots, X_n \overset{ind}{\sim} \text{Bernoulli}\left\{X_{i}^\top \+I_{p,q} X_{j}\right\}, \quad \text{for $i < j$},\]
where $\+I_{p, q}$ is the diagonal matrix consisting of $p$ $+1$'s (for as many positive eigenvalues) followed by $q$ $-1$'s (for as many negative) and $p+q=D$. The model of a generalised random dot product graph \citep{rubin2017statistical} is recognised, from which strong consistency is established \citep[Theorems 5,6]{rubin2017statistical}:
\[
  \underset{i \in [n]}{\textnormal{max}}\|\+Q \hat X_i - X_i\|_2  = \probO\left(\tfrac{(\log n)^{c}}{n^{1/2}}\right),\quad 
  \underset{i \in [n]}{\textnormal{max}}\left\lVert\tilde{\+Q}\breve X_i - \frac{X_i}{\left[X_i, \sum_{j} X_j\right]^{1/2}}\right\rVert_2  = \probO\left(\tfrac{(\log n)^{c}}{n \rho_{n}^{1/2}}\right),
\]

for some $c>0$, where $\+Q, \tilde{\+Q}$ are random unidentifiable matrices belonging to the indefinite orthogonal group $\mathbb{O}(p,q) = \{\+M \in \R^{D \times D}:\ \+M \+I_{p,q} \+M^\top = \+I_{p,q}\}$, and $\rho_n$ is the graph sparsity factor, assumed to satisfy $n\rho_{n} =\omega\{(\log n)^{4c}\}$. Moreover, $\+Q^{-1}$ has almost surely bounded spectral norm \citep{graphPH}, from which the consistency of many subsequent statistical analyses can be demonstrated. To give a generic argument, assume the method under consideration returns a set, such as a manifold estimate, and can therefore be viewed as a function $S: \R^{n \times D} \rightarrow \mathcal{P}(\R^D)$ (where $\mathcal{P}(A)$ is the power set of a set $A$). If $S$ is Lipschitz continuous in the following sense:
\[d_H\{S(\+X), S(\+Y)\} \leq c \: \underset{i \in [n]}{\textnormal{max}}\|X_i - Y_i\|_2,\]
where $c>0$ is independent of $n$ and $d_H$ denotes Hausdorff distance, then
\[d_H\{S(\hat{\+X}), S(\+Q^{-1} \+X)\}\leq c \: \lVert \+Q^{-1} \lVert \: \underset{i \in [n]}{\textnormal{max}}\|\+Q \hat X_i - X_i\|_2 \rightarrow 0,\]
with high probability. One must live with never knowing $\+Q$ and fortunately many inferential questions are unaffected by a joint linear transformation of the data, some examples given in \citet{rubin2017statistical}. %Beyond that, what can be inferred from $X_1, \ldots X_n$ can asymptotically also be inferred from $\hat X_1, \ldots, \hat X_n$.

A refinement of this argument was used to prove consistency of the kernel density estimate and its persistence diagram in \citet{graphPH}, and more have utilised the simpler version with $\+Q \in \mathbb{O}(D)$, under positive-definite assumptions \citep{lyzinski2017community,rubin2017consistency,athreya2018estimation,trosset2020learning}.

\section{Applications}\label{sec:applications}

\subsection{Graph regression}
Given measurements $Y_1, ..., Y_m \in \R$ for a subset $\{1, \ldots, m\}$ of the nodes, a common problem is to predict $Y_i$ for $i > m$. After spectral embedding, each $\hat X_i$ can be treated as a vector of covariates --- a feature vector --- but by the afore-going discussion their dimension $D$ (or any sensible estimate thereof) may be large, giving a potentially false impression of complexity. On the basis of recent results on the generalisation error of neural network regression under low intrinsic (Minkowski) dimension \citep{nakada2019adaptive}, one could hope for example that spectral embedding, with $\hat D \rightarrow \infty$ appropriately slowly, followed by neural network regression, could approach the rate $n^{-2 \beta/(2 \beta+ d)}$ rather than the standard non-parametric rate $n^{-2 \beta/(2 \beta+ D)}$ (with $\beta$ measuring regression function smoothness).

For illustration purposes, consider a linear regression model $Y_i = a + b Z_i + \epsilon_i$ where, rather than $Z_i$ directly, a graph posited to have latent positions $Z_i \in \R$ is observed, with unknown model kernel. Under assumptions set out in this paper (with $\alpha=1$), the spectral embedding should concentrate about a one-dimensional structure, and the performance of a nonlinear regression technique that exploits hidden manifold structure may come close to that of the ideal prediction based on $Z_i$.

For experimental parameters $a = 5$, $b = 2$, $\epsilon \sim \text{normal}(0,1)$, the kernel $f(x,y) = 1-\exp(-2 x y)$ (seen earlier), $\hat D = 100$ and $n = 5000$, split into 3,000 training and 2,000 test examples, the out-of-sample mean square error (MSE) of four methods are compared: a feedforward neural network (using \href{https://keras.rstudio.com/}{default R keras configuration} with obvious adjustments for input dimension and loss function; MSE 1.25); the random forest \citep{breiman2001random} (default configuration of the R package randomForest; MSE 1.11); the Lasso \citep{tibshirani1996regression} (default R glmnet configuration; MSE 1.58); and least-squares (MSE 1.63). In support of the claim, the random forest and the neural network reach out-of-sample MSE closest to the ideal rate of 1. A graphical comparison is shown in Figure~\ref{fig:embedding_regression} (Supplementary Material).

\subsection{Manifold estimation}
Obtaining a manifold estimate has several potential uses, including gaining insight into the dimension and kernel of the latent position model. A brief experiment exploring the performance of a provably consistent manifold estimation technique is now described. Figure~\ref{fig:low_dimensional_examples_ridge} (Supplementary Material) shows the one-dimensional kernel density ridge \citep{ozertem2011locally,genovese2014nonparametric} of the three point clouds studied earlier (Section~\ref{sec:prelim}, Figure~\ref{fig:low_dimensional_examples_no_ridge}), obtained using the R package ks in default configuration. The quality of this (and any) manifold estimate depends on the degree of off-manifold noise, for which asymptotic control has been discussed, and manifold structure, where much less is known under spectral embedding. The estimate is appreciably worse in the second case although, in its favour, it correctly detects a branching point which is hard to distinguish by eye.

\begin{figure}[t]
  \centering
  \includegraphics[width=\textwidth]{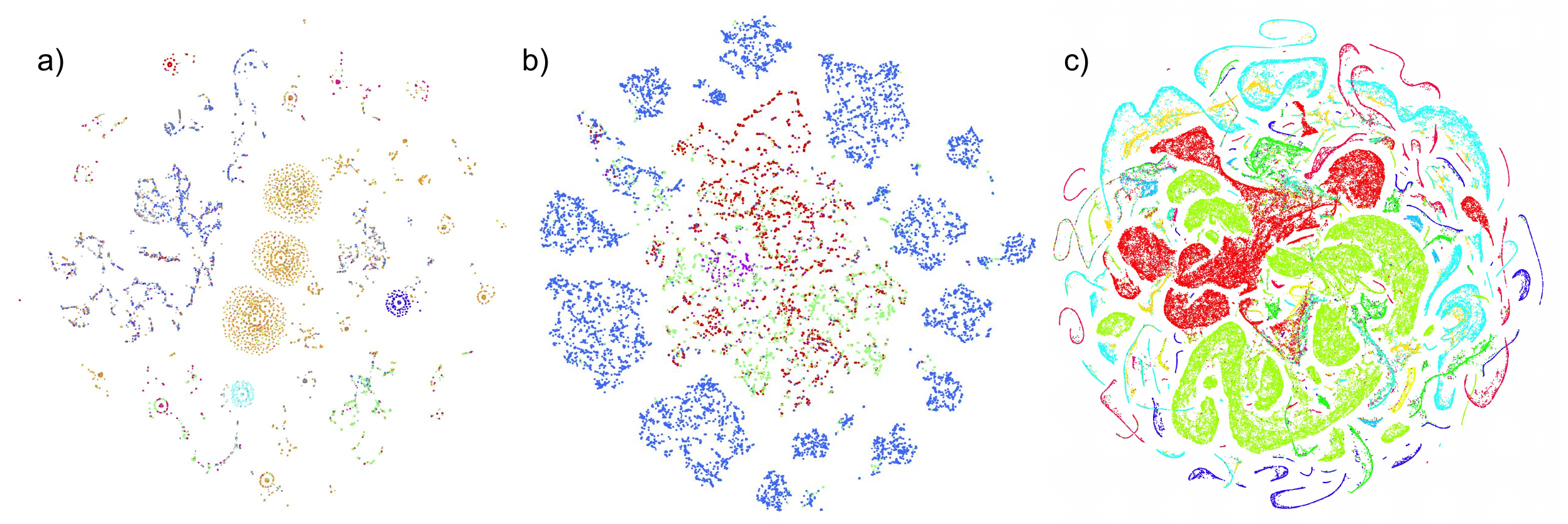}
  \caption{Non-linear dimension reduction of spectral embeddings. a) Graph of computer-to-computer network flow events on the Los Alamos National Laboratory network, from the publically available dataset \citep{kent16}, with colours indicating port preference (intrinsic dimension estimates 2.5/1.8/3.9); b) graph of computer-to-computer authentication events on the same network, with colours indicating authentication type (intrinsic dimension estimates 1.6/1.3/2.4); c) graph of consumer-restaurant ratings, showing only the restaurants, extracted from the publically available \href{https://www.yelp.com/dataset}{Yelp dataset}, with colours indicating US state (intrinsic dimension estimates 4.1/3.9/3.0). In each case, the graph is embedded into $\R^{10}$, followed by applying t-distributed stochastic neighbour embedding \citep{maaten2008visualizing}.}
  \label{fig:non-linear-combined}
\end{figure}

\subsection{Real data}\label{sec:visb}
From an exploratory data analysis perspective, the theory presented helps make sense of spectral embedding followed by non-linear dimension reduction, an approach that seems to generate informative visualisations of moderate-size graphs at low programming and computational cost. This is illustrated in three real data examples, detailed in Figure~\ref{fig:non-linear-combined}. In each case, the graph is embedded into $\R^{10}$, followed by applying t-distributed stochastic neighbour embedding \citep{maaten2008visualizing}. Other methods such as Uniform manifold approximation \citep{mcinnes2018umap} were tested with comparable results (Supplementary Material, Figure~\ref{fig:non-linear-combined_umap}), but directly spectrally embedding into $\R^2$, i.e., linear dimension reduction, is less effective (Supplementary Material, Figure~\ref{fig:non-linear-combined_svd}). Because the first example is reproduced from \citet{rubin2017statistical}, that paper's original choice, $\hat D = 10$, although avowedly arbitrary, is upheld. The method of \citet{zhu2006automatic}, advocated in the introduction, instead returns a rank estimate $\hat D = 6$.

In the first example, the full graph of connections between computers on the Los Alamos National Laboratory network, with roughly 12 thousand nodes and one hundred thousand edges, is curated from the publically available release \citep{kent16}. The colours in the plot indicate the node's most commonly used port  (e.g. 80=web, 25=email), reflecting its typical communication behaviour (experiment reproduced from \citet{rubin2017statistical}). The second example concerns the full graph of authentications between computers on the same network (a different dataset from the same release), with roughly 18,000 nodes and 400,000 edges, colours now indicating authentication type. In the third example a graph of user-restaurant ratings is curated from the publically available \href{https://www.yelp.com/dataset}{Yelp dataset}, with over a million users, 170,000 restaurants and 5 million edges, where an edge $(i,j)$ exists if user $i$ rated restaurant $j$. The plot shows the embedding of restaurants, coloured by US state. It should be noted that in every case the colours are independent of the embedding process, and thus loosely validate geometric features observed. Estimates of the instrinsic dimensions of each of the point clouds are also included. Following \citep{nakada2019adaptive} we report estimates obtained using local principal component analysis \citep{fukunaga1971algorithm,bruske1998intrinsic}, maximum likelihood \citep{haro2008translated} and expected simplex skewness \citep{johnsson2014low}, in that order, as implemented in the R package `intrinsicDimension'. 

\section{Conclusion}
This paper gives a model-based justification for the presence of hidden manifold structure in spectral embeddings, supporting growing empirical evidence \citep{priebe2017semiparametric,athreya2018estimation,trosset2020learning}.
The established correspondence between the model and manifold dimensions suggests several possibilities for statistical inference. For example, determining whether a graph follows a latent position model on $\R^d$ (e.g. connectivity driven by physical location, when $d=3$) could plausibly be achieved by testing a manifold hypothesis \citep{fefferman2016testing}, that targets the intrinsic dimension, $d$, and \emph{not} the more common low-rank hypothesis \citep{seshadhri2020impossibility}, that targets the ambient dimension, $D$. On the other hand, the description of the manifold given in this paper is by no means complete, since the only property established about it is its Hausdorff dimension. This leaves open what other inferential possibilities could arise from investigating other manifold properties, such as smoothness or topological structure.

\bibliographystyle{apalike}
\bibliography{graph_embedding}
\newpage
\section*{Supplementary material for ``Manifold structure in graph embeddings''} 

\begin{proof}[Proof of Theorem~\ref{thm:embedding}]
  We have
  \begin{align*}
  \int_{\R^d} \langle k(x,\cdot),k(y,\cdot) \rangle g(y) dy &= \int_{\R^d} \sum_{j} |\lambda_j| u_j(x) u_j(y) g(y) dy = | A | g(y)\\
    &=\int_{\R^d} [f_+(x,y) + f_-(x,y)] g(y) dy,
\end{align*}
and therefore % =: f_{\pm}(x,y)
\begin{equation}
  \langle k(x,\cdot),k(y,\cdot) \rangle = f_+(x,y) + f_-(x,y)\label{eq:inner_product}
\end{equation}
almost everywhere. Explicitly, there exists $N \subset \R^d \times \R^d$ with Lebesgue measure zero such that the two are equal in $\R^d \times \R^d \setminus N$. The set $N_0 = \{x \in \R^d: \text{$(x,y)\in N$ or $(y,x)\in N$}\}$ has $d$-dimensional Lebesgue measure 0 and, by absolute continuity of $Z$, the set $\ubar{\mathcal{Z}} := \mathcal{Z} \setminus N_0$ has Hausdorff dimension $d$, satisfies $\Prob(Z \in \ubar{\mathcal{Z}}) = 1$, and Equation \eqref{eq:inner_product} holds for all $x,y \in \ubar{\mathcal{Z}}$.

Let $\mathcal{M}$ be the image of $\ubar{\mathcal{Z}}$ under the map $\phi$, which in general maps any $x \in \ubar{\R}^d := \R^d \setminus N_0$ to an element of $L^2(\R^d)$. Any cover $ \ubar{\mathcal{Z}}  \subseteq \cup_i R_i$ with sets in $\ubar{\R}^d$ gives a cover $\mathcal{M}  \subseteq \cup_i S_i$ with sets in $L^2(\R^d)$, where each $S_i$ is the image of $R_i$ under $\phi$. Note that the Hausdorff dimension of $\ubar{\mathcal{Z}}$ is not increased when considering only such covers.

% with covers of $\ubar{Z}$ so restricted, i.e., with points of $N_0$ removed, the Hausdorf dimension $\dim(\ubar{Z})$ is not increased.

% one cannot increase $\mathscr{H}^{a}(\ubar{Z})$ by forbidding points from the cover that are not in $\ubar{Z}$, and therefore $\dim(\ubar{Z}) \leq d$ with covers restricted so.

We have
\begin{align*}
  \lVert k(x, \cdot) - k(y, \cdot) \rVert_2^2 & = \langle k(x,\cdot), k(x,\cdot)\rangle + \langle k(y,\cdot), k(y,\cdot)\rangle - 2 \langle k(x,\cdot), k(y,\cdot)\rangle\\
                                              &= \Delta^2 f_+(x,y)+\Delta^2 f_-(x,y)\\
                                              &\leq c \lVert x - y\rVert_2^{2 \alpha},
\end{align*}
and therefore $|S_i| \leq c^{1/2} |R_i|^{\alpha}$ for each $i$. Hence, $\mathscr{H}^{a}(\mathcal{M}) \leq c^{a/2} \mathscr{H}^{a \alpha}(\ubar{\mathcal{Z}})$ and $\dim(\mathcal{M}) \leq d / \alpha$.
\end{proof}

\begin{lemma}\label{lem:poly}
Consider a polynomial kernel over a bounded region $\mathcal{Z} \subset \R^d$,
\[f(x,y) = \sum_{|\alpha|+|\beta| < k} c(\alpha, \beta) x^\alpha y^{\beta},\quad{x,y \in \mathcal{Z}},\]
where  $c(\alpha, \beta) = c(\beta, \alpha) \in \R$, and multi-index notation is used \citep{folland1999real}, that is, $x^\alpha := \prod x^{\alpha_i}, |\alpha| := \sum \alpha_i$, for $\alpha \in \R^d, \alpha_i \geq 0$. The associated integral operator has finite rank.
\end{lemma}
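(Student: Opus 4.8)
The plan is to exploit the fact that a polynomial kernel is \emph{separable} (degenerate): it is a finite linear combination of products of a function of $x$ and a function of $y$, so its integral operator is a finite sum of rank-one operators. Throughout, $f$ is understood to be extended by zero outside $\mathcal{Z}\times\mathcal{Z}$, as in Section~\ref{sec:representation}, and $A$ acts on $L^2(\R^d)$.

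First I would record the only place where boundedness of $\mathcal{Z}$ is used: for each multi-index $\alpha$ with $|\alpha|<k$, the monomial $x\mapsto x^\alpha$ is bounded on $\mathcal{Z}$, and $\mathcal{Z}$ has finite Lebesgue measure, so
\[
e_\alpha(x) := x^\alpha \, \mathbf{1}_{\mathcal{Z}}(x) \in L^2(\R^d).
\]
Then, for any $g \in L^2(\R^d)$, interchanging the (finite) sum with the integral gives
\[
A g(x) = \int_{\mathcal{Z}} f(x,y)\, g(y)\, dy = \sum_{|\alpha|+|\beta|<k} c(\alpha,\beta)\, e_\alpha(x) \int_{\mathcal{Z}} y^\beta g(y)\, dy = \sum_{|\alpha|+|\beta|<k} c(\alpha,\beta)\, \langle g, e_\beta\rangle\, e_\alpha(x),
\]
where each coefficient $\langle g, e_\beta\rangle$ is finite by the Cauchy--Schwarz inequality since $e_\beta \in L^2(\R^d)$. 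Hence the range of $A$ is contained in the linear span of the finite family $\{e_\alpha : |\alpha|<k\}$, a finite-dimensional subspace of $L^2(\R^d)$ whose dimension is at most the number of such multi-indices, namely $\binom{k-1+d}{d}$. Therefore $A$ has finite rank, i.e.\ $D \leq \binom{k-1+d}{d} < \infty$.

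There is essentially no obstacle here; the only genuine hypothesis used is the boundedness of $\mathcal{Z}$, which secures $e_\alpha \in L^2$, and the rest is the textbook fact that a degenerate kernel yields a finite-rank operator. It is worth adding two forward-looking remarks, though neither is needed for the lemma itself: first, finite rank trivially implies trace-class, so Assumption~\ref{assump:assump_1} holds automatically; second, since any eigenfunction $u_j$ of $A$ with $\lambda_j \neq 0$ satisfies $\lambda_j u_j(x) = \sum_{|\alpha|+|\beta|<k} c(\alpha,\beta)\langle u_j, e_\beta\rangle\, x^\alpha$ on $\mathcal{Z}$, each $u_j$ is (the restriction to $\mathcal{Z}$ of) a polynomial, so $f_+$ and $f_-$ are polynomials on $\mathcal{Z}\times\mathcal{Z}$; consequently their mixed second partials are bounded on the compact closure $\overline{\mathcal{Z}}\times\overline{\mathcal{Z}}$, and a Taylor expansion along the diagonal yields Assumption~\ref{assump:assump_2} with $\alpha=1$. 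Thus every latent position model with a finite-degree polynomial kernel on a bounded region falls squarely within Theorem~\ref{thm:embedding} (with intrinsic dimension $d$) and the finite-rank consistency theory of Section~\ref{sec:consistency}.
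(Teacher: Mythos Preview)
Your argument is correct and follows essentially the same route as the paper: write $Ag(x)$ as a finite linear combination of the monomials $x^\alpha$ (restricted to $\mathcal{Z}$), conclude the range is finite-dimensional, hence $A$ has finite rank. Your version is slightly more explicit, spelling out why $e_\alpha\in L^2(\R^d)$ and giving the bound $D\le\binom{k-1+d}{d}$, and the additional remarks on Assumptions~\ref{assump:assump_1}--\ref{assump:assump_2} are correct bonuses not present in the paper's proof.
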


\begin{proof}
Since 
\begin{align*}
  A g(x) &= \sum_{|\alpha|<k} x^{\alpha} \sum_{|\alpha|+|\beta| < k} c(\alpha, \beta) \int_{\mathcal{Z}} y^{\beta} g(y) dy,
\end{align*}
the function $A g$ is a linear combination of a finite set of $L^2$ functions $\{x^{\alpha}: |\alpha| < k \}$ so that, having finite-dimensional range, the operator $A$ has by definition finite rank \citep{debnath2005introduction}.
\end{proof}

\begin{lemma} \label{lem:analytic}
  Suppose $f$ is analytic on $\mathcal{Z}$ with
\[f(x,y) = \sum_{|\alpha|+|\beta| = 1}^\infty c(\alpha, \beta) x^\alpha y^{\beta},\quad x,y \in \mathcal{Z},\]
i.e., no constant term, and consider a sparse graph regime $Z_i = r_n W_i$ where $W_i \overset{i.i.d.}\sim F_W$ and the positive scalar sequence $r_n \rightarrow 0$. Assume that $r_n = o(n^{-2/k})$, for some $k\in \N$, and that
\[f_k(x,y) := \sum_{1 \leq |\alpha|+|\beta| < k} c(\alpha, \beta) x^\alpha y^{\beta} \in [0,1].\]
A latent position model with kernel $f$ is asymptotically indistinguishable from one with (finite rank) kernel $f_k$.
\end{lemma}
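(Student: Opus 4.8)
The plan is to build both models on the \emph{same} latent positions $Z_i = r_n W_i$ and exhibit a coupling of their adjacency matrices under which they coincide with probability tending to one; this makes the total variation distance between the two graph laws vanish, which is exactly the asserted asymptotic indistinguishability. As a preliminary observation, $f_k$ is a polynomial in $x,y$ of degree strictly below $k$ on the bounded region $\mathcal{Z}$, so by Lemma~\ref{lem:poly} its integral operator has finite rank; combined with the hypothesis $f_k \in [0,1]$, this makes $f_k$ a legitimate latent position model kernel (on $\mathcal{Z}$, hence on each scaled sub-domain $r_n\cdot\mathrm{supp}(F_W)$), for which the strong finite-rank consistency of Section~\ref{sec:consistency} is available.

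Two quantitative ingredients remain. First, a uniform estimate on the edge-probability discrepancy. Let $M$ bound $\lVert W\rVert_\infty$, and let $\delta>0$ be such that the series defining $f$ converges absolutely on the polydisc of radius $\delta$ --- which analyticity at the origin guarantees. Then for every $n$ with $r_n M \le \min(\delta,1)$,
\begin{align*}
  \bigl| f(r_n w, r_n w') - f_k(r_n w, r_n w') \bigr|
  &\le \sum_{|\alpha|+|\beta| \ge k} |c(\alpha,\beta)|\,(r_n M)^{|\alpha|+|\beta|}\\
  &\le (r_n M/\delta)^{k}\sum_{|\alpha|+|\beta|\ge 1} |c(\alpha,\beta)|\,\delta^{|\alpha|+|\beta|}
  \;=:\; C\, r_n^{k},
\end{align*}
uniformly over $w,w'\in\mathrm{supp}(F_W)$, with $C<\infty$ independent of $n$. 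Second, the coupling: conditionally on $Z_1,\dots,Z_n$, draw independent $U_{ij}\sim\mathrm{Uniform}[0,1]$ for $i<j$ and set $\+A_{ij}:=\mathbf{1}\{U_{ij}\le f(Z_i,Z_j)\}$ and $\+A'_{ij}:=\mathbf{1}\{U_{ij}\le f_k(Z_i,Z_j)\}$; this jointly realises the two models. The event $\{\+A_{ij}\ne\+A'_{ij}\}$ has (conditional, hence unconditional) probability $|f(Z_i,Z_j)-f_k(Z_i,Z_j)|\le C r_n^k$, so a union bound over the $\binom{n}{2}$ pairs gives $\Prob(\+A\ne\+A')\le\binom{n}{2}C r_n^k = O(n^2 r_n^k)$, which is $o(1)$ precisely because $r_n = o(n^{-2/k})$ forces $n^2 r_n^k\to 0$. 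By the coupling inequality the two graph laws then differ by $o(1)$ in total variation and are asymptotically indistinguishable.

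The step that will need the most care is the uniform tail bound: one must make sure ``$f$ analytic on $\mathcal{Z}$'' is read strongly enough to supply a positive polyradius of absolute convergence $\delta$ for the series centred at the origin, since this is what yields an $n$-free constant $C$ and, crucially, extracts the full power $r_n^k$ --- each surviving term contributes $(r_n M)^{|\alpha|+|\beta|}\le (r_n M)^k$ once $r_n M\le 1$, and the residual sum over $|\alpha|+|\beta|\ge k$ is summable by absolute convergence. If only local analyticity is assumed, one first passes to a fixed compact neighbourhood of the origin containing $r_n\cdot\mathrm{supp}(F_W)$ for all large $n$ (legitimate because $r_n\to0$, so only the behaviour of $f$ near the origin is ever probed) and the argument goes through verbatim; the remaining hypotheses --- $\mathcal{Z}$ bounded, $\mathrm{supp}(F_W)$ bounded, and $f_k\in[0,1]$ --- are exactly what is needed to apply Lemma~\ref{lem:poly} and to make $f_k$ a valid sparsified kernel. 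A final cosmetic point is that ``no constant term'' keeps $f$ and $f_k$ sharing the same (absent) degree-zero part, consistent with the sparse regime $f(Z_i,Z_j)\to 0$.
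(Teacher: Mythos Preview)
Your argument is correct and follows the same strategy as the paper: couple the two adjacency matrices through shared uniforms on the common latent positions, bound each per-edge discrepancy by the kernel tail $|f-f_k|=O(r_n^{k})$, and conclude via a union bound that $\Prob(\+A\neq\+A')=O(n^{2}r_n^{k})\to 0$. The only minor difference is that you obtain the tail bound uniformly via absolute convergence on a polydisc (tacitly assuming bounded $\mathrm{supp}(F_W)$), whereas the paper bounds it in expectation using $f,f_k\in[0,1]$; both routes deliver the same rate.
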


% $\E\{r(W_i,W_j)\} < \infty$ where
%   \[r(x,y) = \sum_{|\alpha|+|\beta| \geq k}^{\infty} c(\alpha, \beta) x^\alpha y^{\beta}, \quad x,y \in \mathcal{Z},\]
%   and $f_k(x,y) = f(x,y) - r(x,y) \in [0,1]$.
\begin{proof}
  The graph adjacency matrix $\+A$ can be coupled with another random matrix $\+A^{(k)} \in \{0,1\}^{n \times n}$ so that
  \[\Prob(\+A_{ij} \neq \+A^{(k)}_{ij} \mid Z_i, Z_j) = |f(Z_i, Z_j)-f_k(Z_i, Z_j)|,\]
  and $\+A^{(k)}$ marginally follows a latent position model with kernel $f_k$. Therefore,
  \[\Prob(\+A_{ij} \neq \+A^{(k)}_{ij}) = \E|f(Z_i, Z_j)-f_k(Z_i, Z_j)|,\]
  and
  \begin{align*}
    \Prob(\+A \neq \+A^{(k)}) &\leq n^2 \E|f(Z_i, Z_j)-f_k(Z_i, Z_j)| \\
                              &= n^2 \E\bigg|\sum_{|\alpha|+|\beta| = k}^\infty c(\alpha, \beta) Z_i^\alpha Z_j^{\beta}\bigg|\\
                              &\leq n^2 \bigg|\sum_{|\alpha|+|\beta| = k}^\infty r_n^{|\alpha|+|\beta|}\bigg| \ \ \E\bigg|\sum_{|\alpha|+|\beta| = k}^\infty c(\alpha, \beta) W_i^\alpha W_j^{\beta}\bigg|\\
                              &= n^2 \frac{(2 d r_n)^k}{1-2 d r_n} \E|f(W_i, W_j) - f_k(W_i, W_j)|,
  \end{align*}
  which, by the boundedness of $f$ and $f_k$, tends to zero when $n^2 r^k_n \rightarrow 0$. Therefore $\+A$ is asymptotically indistinguishable from a graph with finite kernel rank.
\end{proof}

\subsection*{Limitations of existing infinite rank results}
Under a graphon model ($d=1$, $\mathcal{Z} = [0,1]$, $F_Z = \text{uniform}[0,1]$) with suitably growing $D$, \citet{lei2018network} proves consistency of $\hat X_1, \ldots, \hat X_n$, in the orthogonal Wasserstein distance
\begin{equation*}
  d(\hat F_X, F_X) = \inf_{\nu} \inf_{W} \E \lVert W \hat X - X\rVert_2,
\end{equation*}
where $\hat F_X$ is the empirical distribution of $\hat X_1, \ldots, \hat X_n$, the pair $\hat X, X$ are jointly distributed as $\nu$ chosen among all distributions with respective marginals  $\hat F_X$ and $F_X$ (the distribution of $X$ induced by $F_Z$ under the map $\phi$), and finally $W$ is some \emph{orthogonal} transformation satisfying $[W g, W h] = [g,h]$ for all $g,h \in L^2([0,1])$ (for the indefinite inner product given in Section~\ref{sec:representation}).

While an extension to the $d$-dimensional case is conceivable, this line of analysis is complicated in the present context by two issues. First, the group of transformations leaving $[\cdot, \cdot]$ invariant comprises non-orthogonal elements and can be restricted, in the graphon case, only because of the canonical choice $F_Z = \text{uniform}[0,1]$ (a uniform probability measure not being available in $\R^d$). Distance-distorting transformations must be expected in general, as they are in the finite rank case (the matrices $\+Q, \tilde{\+Q} \in \mathbb{O}(p,q)$ in main text, Section~\ref{sec:consistency}). Second, the Wasserstein consistency criterion allows unboundedly high error in unboundedly high absolute numbers of nodes, provided their proportion vanishes, and this may break subsequent statistical analyses of the sort proposed here.

\begin{figure}[t]
  \centering
  \includegraphics[width=7cm]{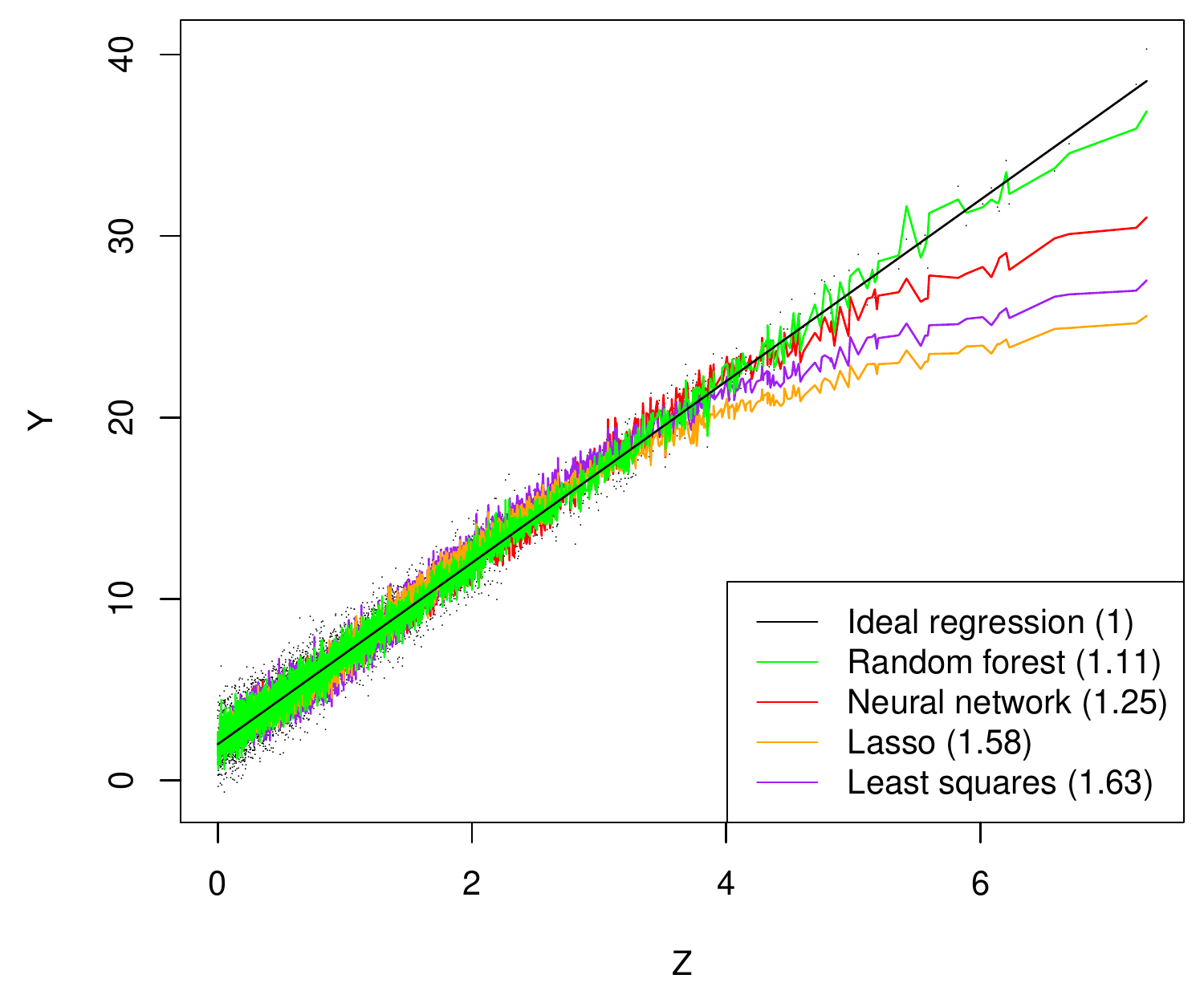}
  \caption{Graph regression. Different regression techniques are compared, with their achieved mean square error in brackets, on a 100-dimensional spectral embedding. The graph follows a latent position network model on $\R$ with kernel $f(x,y) = 1-\exp(-2 x y)$, and the embedding is therefore concentrated about a one-dimensional manifold, shown in Figure~\ref{fig:low_dimensional_examples_no_ridge}a). The responses follow a linear model $Y_i = a + b Z_i + \epsilon_i$. Further details in main text.}
  \label{fig:embedding_regression}
\end{figure}

\begin{figure}[t]
  \centering
  \includegraphics[width=\textwidth]{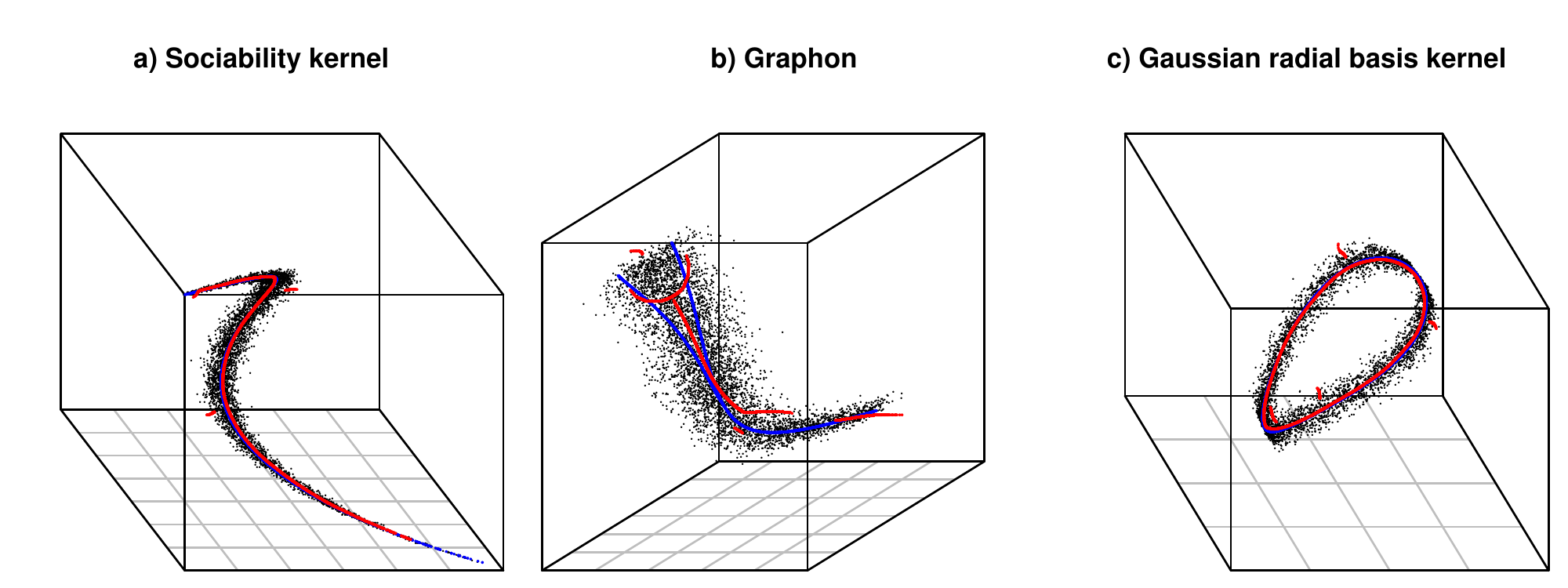}
  \caption{Kernel density ridge sets (red) as estimates of the underlying manifold (blue), for embeddings of simulated graphs described in Section~\ref{sec:prelim} and also shown in Figure~\ref{fig:low_dimensional_examples_no_ridge}.}
  \label{fig:low_dimensional_examples_ridge}
\end{figure}

\begin{figure}[t]
  \centering
  \includegraphics[width=\textwidth]{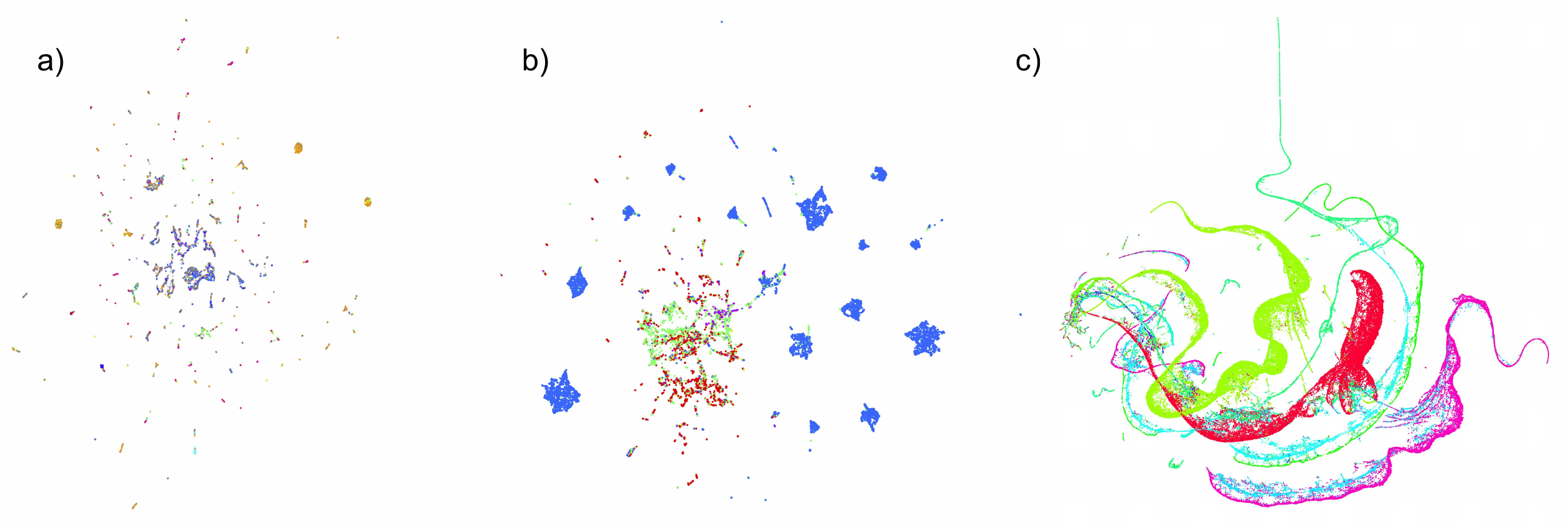}
  \caption{Non-linear dimension reduction of spectral embeddings. a) Graph of computer-to-computer network flow events on the Los Alamos National Laboratory network, from the publically available dataset \citep{kent16}, with colours indicating port preference; b) graph of computer-to-computer authentication events on the same network, with colours indicating authentication type; c) graph of consumer-restaurant ratings, showing only the restaurants, extracted from the publically available \href{https://www.yelp.com/dataset}{Yelp dataset}, with colours indicating US state. In each case, the graph is embedded into $\R^{10}$, followed by applying Uniform manifold approximation \citep{mcinnes2018umap}.}
  \label{fig:non-linear-combined_umap}
\end{figure}

\begin{figure}[t]
  \centering
  \includegraphics[width=\textwidth]{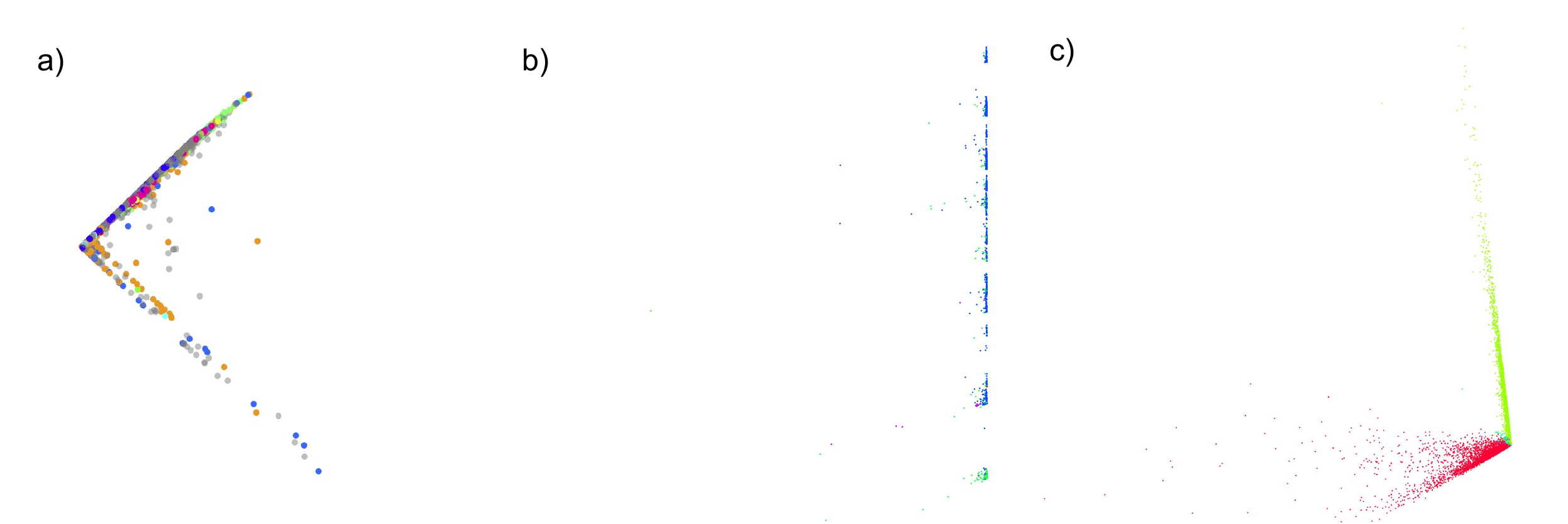}
  \caption{Spectral embedding into $\R^2$. a) Graph of computer-to-computer network flow events on the Los Alamos National Laboratory network, from the publically available dataset \citep{kent16}, with colours indicating port preference; b) graph of computer-to-computer authentication events on the same network, with colours indicating authentication type; c) graph of consumer-restaurant ratings, showing only the restaurants, extracted from the publically available \href{https://www.yelp.com/dataset}{Yelp dataset}, with colours indicating US state.}
  \label{fig:non-linear-combined_svd}
\end{figure}

\end{document}